\newcommand{\agmissingfigure}[1]{%
    \fbox{%
        \begin{minipage}[c][0.24\textheight][c]{0.88\linewidth}
            \centering\sffamily\small
            \textcolor{AGMuted}{Missing figure asset}\\[0.4em]
            \texttt{\detokenize{#1}}
        \end{minipage}%
    }%
}
\newcommand{\agincludegraphics}[2][]{%
    \IfFileExists{#2}{\includegraphics[#1]{#2}}{%
        \IfFileExists{assets/#2}{\includegraphics[#1]{#2}}{\agmissingfigure{#2}}%
    }%
}
\definecolor{AGBlue}{HTML}{1647F5}
\definecolor{AGInk}{HTML}{101318}
\definecolor{AGMuted}{HTML}{5F6673}
\definecolor{AGCyan}{HTML}{04A7C7}
\definecolor{AGRule}{HTML}{DDE4F0}
\definecolor{AGPale}{HTML}{F5F8FF}
\bfseries\color{AGBlue}}
\bfseries\color{AGInk}}
\itshape\color{AGMuted}}
\titlespacing*{\section}{0pt}{2.3ex plus 0.6ex minus 0.2ex}{1.1ex}
\titlespacing*{\subsection}{0pt}{1.7ex plus 0.4ex minus 0.2ex}{0.7ex}
\setlist[itemize]{leftmargin=*,topsep=3pt,itemsep=2pt}
\setlist[enumerate]{leftmargin=*,topsep=3pt,itemsep=2pt}
\theoremstyle{plain}
\newtheorem{theorem}{Theorem}[section]
\newtheorem{proposition}[theorem]{Proposition}
\theoremstyle{definition}
\newtheorem{definition}[theorem]{Definition}
\newtheorem{assumption}[theorem]{Assumption}
\theoremstyle{remark}
\newcommand{\sep}{\unskip;\space}
\newenvironment{keyword}{%
    \par\vspace{0.6em}
    \noindent{\sffamily\bfseries\color{AGBlue}Keywords.}\space
}{\par\vspace{1.0em}}
\renewcommand{\headrulewidth}{0.6pt}
\renewcommand{\headrule}{\hbox to\headwidth{\color{AGRule}\leaders\hrule height \headrulewidth\hfill}}
\renewcommand{\maketitle}{%
  \bgroup\setlength{\parindent}{0pt}%
  \begin{flushleft}
    \vspace*{-1.1cm}
    \agincludegraphics[height=0.86cm]{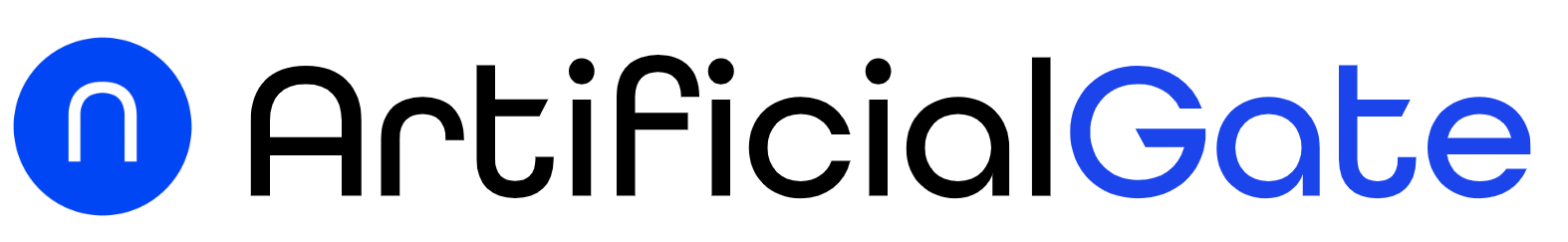}\\[0.35cm]
    {\color{AGBlue}\rule{\textwidth}{1.2pt}}\\[0.75cm]
    {\small\sffamily\bfseries\color{AGMuted}ArtificialGate Ltd. Research Manuscript\par}
    {\small\sffamily\color{AGMuted}Preprint / Research manuscript. Not yet peer-reviewed.\par}
    \vspace{0.35cm}
    {\LARGE\sffamily\bfseries\color{AGInk}\@title\par}
    \vspace{0.55cm}
    {\large\@author\par}
  \end{flushleft}
  \egroup
  \vspace{0.4cm}
  \thispagestyle{empty}%
}
\newcommand\blfootnote[1]{%
  \begingroup
  \renewcommand\thefootnote{}\footnote{#1}%
  \addtocounter{footnote}{-1}%
  \endgroup
}
\title{Automatic Stability and Recovery for Neural Network Training}
\author{%
  \textbf{Barak Or}\\
  \small ArtificialGate Ltd., Israel\\
  \texttt{barak@artificialgate.ai}
}
\date{}
\begin{document}
\maketitle

\blfootnote{This manuscript is formatted as an ArtificialGate Ltd. research manuscript. It is a preprint / research manuscript and has not yet been peer reviewed.}

\begin{abstract}
Training modern neural networks is increasingly fragile, with rare but severe destabilizing updates often causing irreversible divergence or silent performance degradation. Existing optimization methods primarily rely on preventive mechanisms embedded within the optimizer, offering limited ability to detect and recover from instability once it occurs.

We introduce a supervisory runtime stability framework that treats optimization as a controlled stochastic process. By isolating an innovation signal derived from secondary measurements, such as validation probes, the framework enables automatic detection and recovery from destabilizing updates without modifying the underlying optimizer. We provide theoretical runtime safety guarantees that formalize bounded degradation and recovery. Our implementation incurs minimal overhead and is compatible with memory-constrained training settings.
\end{abstract}

\begin{keyword}
Training reliability\sep Learning dynamics\sep Robust optimization\sep Runtime recovery\sep Control-theoretic methods
\end{keyword}

\section{Introduction}

Despite significant advances in optimization algorithms, model architectures, and large-scale infrastructure, practitioners routinely encounter training runs that diverge, collapse, or silently degrade after a small number of destabilizing updates. In such settings, a single pathological update-caused by an outlier minibatch, transient numerical instability, or abrupt distributional shift-can irreversibly corrupt the training trajectory, leading to wasted computation and unreliable experimental outcomes.

Most existing approaches to mitigating training instability focus on \emph{preventive mechanisms} embedded within the optimizer itself. Examples include adaptive learning rates and momentum schemes \cite{polyak1964some,kingma2014adam}, gradient clipping \cite{pascanu2013difficulty}, trust-region or curvature-aware methods \cite{martens2010deep}, and related variance-reduction techniques. While effective in many scenarios, these methods are inherently local and proactive: they aim to reduce the likelihood of instability, but provide limited protection once a destabilizing update has already occurred. In practice, when training collapses due to an unexpected event, standard pipelines offer no principled mechanism for detection, rollback, or recovery.

In this work, we propose a \emph{runtime stability layer} for neural network training that treats optimization as a controlled stochastic process. Rather than modifying the optimizer update rule, our approach operates above the optimizer, monitoring optimizer-proposed updates using secondary measurement signals that are not directly optimized by the training objective. Examples of such signals include small validation probes, loss trajectory statistics, or measures of gradient agreement. We refer to this discrepancy as an innovation signal: a runtime residual between the behavior expected under recent stable training dynamics and the behavior actually observed after an optimizer-proposed update. Large positive innovations indicate that the proposed update is inconsistent with the recent training regime and may therefore correspond to a destabilizing event.

When instability is detected, the controller intervenes through a conservative corrective action: rejecting the proposed update and restoring the training state to a previously accepted snapshot. The contributions of this work are as follows:
\begin{itemize}
    \item We introduce an optimizer-agnostic control-theoretic framework for runtime stability and recovery in neural network training.

    \item We provide theoretical guarantees in the form of runtime safety invariants, showing that the controller enforces bounded degradation with respect to the chosen measurement signal and enables recovery from destabilizing updates.
    \item We demonstrate empirically that the proposed method reduces training divergence, improves robustness under controlled destabilization, and recovers from catastrophic updates across two representative settings: CIFAR-10 image classification with ResNet-18 and character-level sequence modeling with a Transformer.

\end{itemize}

\section{Related Work}

Our work intersects several lines of research in optimization, training stability, and control-inspired learning. 

\subsection{Optimization and Adaptive Training Methods}

Prior work on training stability has primarily addressed instability by altering the optimization dynamics themselves. Momentum-based methods and adaptive optimizers, including SGD with momentum, Adam, and AdamW \cite{polyak1964some,kingma2014adam,loshchilov2017decoupled}, rescale or smooth parameter updates using local gradient statistics. Complementary mechanisms such as gradient clipping \cite{pascanu2013difficulty} and trust-region or curvature-aware methods \cite{martens2010deep} constrain update magnitudes or local geometry in order to reduce the likelihood of divergence.

More recent methods, such as Sharpness-Aware Minimization (SAM) \cite{foret2020sharpness} and the Lookahead optimizer \cite{zhang2019lookahead}, seek to improve generalization or smooth training dynamics by modifying the update rule or maintaining auxiliary parameter trajectories. While effective in many settings, these approaches are inherently \emph{preventive}: they alter how updates are computed in order to reduce the likelihood of instability. Once a destabilizing update has already been applied, however, standard optimizers provide no mechanism for detecting the failure or recovering from it.

\subsection{Kalman-Inspired and Bayesian Training Approaches}

Several prior works draw inspiration from Kalman filtering or Bayesian inference to improve neural network training. Early work explored Kalman-inspired updates for neural network training \cite{singhal1989training, haykin2004kalman}. More recent methods incorporate uncertainty estimates or Kalman-inspired updates to denoise gradients, adapt learning rates, or approximate curvature information \cite{ritter2018online, osawa2019practical}. Unlike Bayesian or Kalman-style optimizers, we use innovation signals solely for runtime update acceptance, without maintaining probabilistic state or modifying the optimizer.

\subsection{Robustness, Trust Regions, and Training Stability}

Another related line of work focuses on robustness through trust-region methods, constrained optimization, or smoothing techniques such as exponential moving averages of parameters. Trust-region and second-order methods explicitly constrain update magnitudes to prevent divergence \cite{martens2010deep}, while parameter averaging techniques improve stability by smoothing the optimization trajectory \cite{izmailov2018averaging}.

While conceptually related, such techniques operate at the level of update computation and are primarily preventive. They do not explicitly monitor training outcomes using secondary signals, nor do they provide mechanisms for rollback or recovery once an update has been applied. Early stopping and checkpointing offer coarse-grained protection but require manual intervention and do not integrate into the training loop as automated control mechanisms.

In contrast, our method introduces a fine-grained, automated stability controller that continuously monitors training behavior at runtime and intervenes when instability is detected. This enables rapid recovery from rare but severe failures while preserving standard training behavior during stable regimes. Simple heuristics such as gradient clipping are inherently local and preventive, and cannot recover a model once a destabilizing update has corrupted the parameters \cite{pascanu2013difficulty}. In contrast, the proposed approach enables explicit rollback based on secondary signals external to the optimizer.

\section{Models and Training Setup}

We evaluate the proposed runtime stability controller on representative neural network architectures drawn from two widely used model classes: convolutional networks for vision and Transformer-based models for sequence modeling. All models are trained using standard optimization pipelines, without architectural modifications or task-specific tuning beyond conventional choices.

\subsection{Vision Model}

For image classification, we use a ResNet-18 architecture \cite{he2016deep} with a standard classification head. Training is performed on the CIFAR-10 dataset \cite{krizhevsky2009learning} using cross-entropy loss.

Optimization is carried out using AdamW \cite{loshchilov2017decoupled}. Mini-batches of size 128 are used throughout training. Rather than training for a fixed number of epochs, we adopt a fixed-step schedule to enable precise alignment of destabilizing perturbations and controller interventions across runs.

\subsection{Sequence Model}

For sequence modeling, we employ a character-level Transformer encoder \cite{vaswani2017attention}. A linear output head maps hidden representations to vocabulary logits.

The task is next-character prediction on a controlled synthetic text corpus. Training uses cross-entropy loss over the predicted next character and is performed using AdamW \cite{loshchilov2017decoupled}. This setting provides a lightweight sequence-modeling test case for evaluating whether the proposed runtime stability controller generalizes beyond convolutional architectures.

\subsection{Training Protocol}

All models are trained for a fixed horizon of 250 optimization steps. Baseline and controlled runs share identical initialization and mini-batch order.

To evaluate recovery behavior, we introduce a controlled destabilization window by amplifying gradient magnitudes for several consecutive updates. Outside this window, training proceeds without modification. The runtime stability controller monitors training behavior using a fixed probe set and intervenes only when instability is detected. All experiments are repeated over $N=20$ independent random seeds, and results are reported as mean and variance across runs.

To evaluate recovery behavior under a reproducible stress test, we introduce a controlled destabilization window by amplifying gradient magnitudes for several consecutive updates. This perturbation is intentionally severe and is not meant to model the full distribution of naturally occurring training instabilities. Rather, it provides a diagnostic setting in which the effect of a destabilizing update is clearly observable, allowing us to isolate whether the controller can detect behavior inconsistent with recent stable dynamics, reject the proposed update, and restore a previously accepted training state. Unless otherwise stated, the perturbation is applied at step $t=120$ using gradient scale $\zeta = 300$ for a window of 10 iterations.

\section{Training as a Controlled Stochastic System}
We model neural network training as a controlled stochastic dynamical process. Let $\theta_t \in \mathrm{R}^d$ denote the model parameters at iteration $t$. A standard optimizer computes gradients of the training loss and proposes an update $\Delta \theta_t$, yielding a candidate next state
\begin{equation}
\theta_t^{\mathrm{prop}} = \theta_t + \Delta \theta_t .
\end{equation}
We interpret this proposal as a prediction of the next training state.

To assess whether the proposed update is consistent with stable training behavior, we introduce a \emph{measurement signal} that evaluates the effect of the update using information not directly optimized by the training objective. Such signals may include validation probe loss, statistics of the training loss trajectory, or measures of agreement between gradients computed on different minibatches. Importantly, these measurements are not assumed to be unbiased estimates of the training loss, nor do they require strong smoothness or stationarity assumptions.

The controller compares the observed measurement outcome associated with a proposed update against an expected behavior under stable training dynamics. The resulting deviation, referred to as an \emph{innovation signal}, serves as an indicator of whether the proposed update is consistent with recent training behavior. A precise definition of the innovation signal and its estimation is given in Section~\ref{sec:controller}.

In the implementation considered in this work, the innovation signal governs a binary runtime decision: the proposed update is either accepted and applied, or rejected via rollback to a previously accepted training state. The controller evaluates updates before acceptance and intervenes only when instability is detected. These interventions operate at the level of training execution rather than update computation.

\section{Runtime Stability Controller}
\label{sec:controller}

We now describe the proposed runtime stability controller. The controller operates as an external layer in the training loop, evaluating optimizer-proposed updates and intervening only when instability is detected. Crucially, the controller does not modify the optimizer update rule; instead, it governs the acceptance and execution of updates based on observed training behavior.
\subsection{Innovation Signal as a Runtime Consistency Check}

The core mechanism underlying the proposed controller is the use of an \emph{innovation signal} as a runtime consistency check for optimizer-proposed updates. Importantly, we do not interpret the innovation probabilistically, nor do we assume a generative or state-space model of training dynamics. Instead, the innovation serves as a pragmatic indicator of whether a proposed update is consistent with recently observed, stable training behavior.

\begin{definition}[Admissible Innovation Signal]
An innovation signal is any scalar-valued function
\begin{equation}
\nu_t = s\!\left(\theta_t^{\mathrm{prop}}, \mathcal{H}_t\right),
\end{equation}
computed at runtime from a proposed parameter update $\theta_t^{\mathrm{prop}}$ and a finite summary of recent training behavior $\mathcal{H}_t$, where $\mathcal{H}_t$ may include previously accepted measurement values, moving averages, or other bounded statistics derived from past iterations.
The innovation signal is required to satisfy:
\begin{enumerate}
    \item \textbf{Externality.} The signal is not directly optimized by the training objective used to compute gradients.
    \item \textbf{Nominal Reference Consistency.} During ordinary non-failure training, the innovation remains within a tolerance band around its recent reference behavior; equivalently, accepted updates produce residuals that are small relative to the threshold used by the controller.
    \item \textbf{Catastrophic Sensitivity.} Destabilizing updates produce deviations in the signal that are significantly larger than those observed during nominal operation.
\end{enumerate}
\end{definition}

\paragraph{Instantiation Used in This Work.}
In our experiments, we instantiate the innovation signal using a fixed validation probe of $|P|=16$ examples:
\begin{equation}
\nu_t = y_P(\theta_t^{\mathrm{prop}}) - \hat{y}_t,
\end{equation}
where $y_P(\cdot)$ denotes the average loss evaluated on the fixed probe set $P$, and $\hat{y}_t$ is an exponentially weighted moving average of past accepted probe measurements. The probe set remains unchanged throughout training, providing a consistent external reference against which optimizer-proposed updates are evaluated. This choice satisfies all admissibility criteria: the probe loss is external to the optimizer, remains stable during nominal training, and responds sharply to destabilizing updates.

\paragraph{Why Training Loss Is Insufficient.}
A natural alternative is to monitor the training loss directly and rollback upon loss spikes. However, training loss is evaluated on the same minibatch used to generate the update and is therefore tightly coupled to minibatch noise. In practice, this coupling leads to frequent false positives and delayed detection of catastrophic failures. By contrast, probe-based innovation leverages information not seen by the optimizer, enabling earlier and more reliable detection with substantially fewer interventions.
\subsection{Control Actions}

When the innovation signal indicates instability, the controller applies a binary runtime decision rule: the proposed update is either accepted and applied as-is, or rejected via rollback to the most recent accepted training state. This design focuses on minimal, decisive interventions that preserve standard optimization dynamics whenever possible.

Figure~\ref{fig:controller_overview} illustrates how the optimizer, measurement signal, and stability controller interact during training. The optimizer computes a proposed parameter update from the current training batch, while the measurement signal evaluates the effect of that proposal before it is accepted into the training trajectory. The stability controller uses this measurement to decide whether the update should be accepted or whether the system should restore the most recent safe parameter and optimizer state.

\subsection{Measurement Signals}

The controller relies on a secondary \emph{measurement signal} to assess the effect of a proposed update. This signal is evaluated at runtime and is not required to coincide with the training loss. Suitable measurement signals include, but are not limited to:
(i) validation probe loss computed on a small, fixed subset of held-out data,
(ii) statistics of the training loss trajectory, such as abrupt increases or deviations from recent trends,
and (iii) measures of gradient agreement, such as cosine similarity between gradients computed on different minibatches.

\begin{figure}[t]
\centering
\resizebox{\linewidth}{!}{%
\begin{tikzpicture}[
box/.style={
draw=black!70,
rounded corners,
minimum width=2.45cm,
minimum height=0.85cm,
text width=2.35cm,
align=center,
fill=gray!10
},
opt/.style={
box,
fill=gray!18
},
ctrl/.style={
box,
fill=blue!15
},
arrow/.style={
->,
thick,
draw=black!70
}
]

\node[box] (batch) {Training Batch};
\node[opt, right=1.15cm of batch] (opt) {Optimizer};
\node[box, right=1.15cm of opt] (prop) {Proposed Update};
\node[box, right=1.15cm of prop] (state) {Model Parameters};

\node[box, below=1.15cm of prop] (measure) {Measurement Signal};
\node[ctrl, right=1.15cm of measure] (ctrl) {Stability Controller};

\draw[arrow] (batch) -- (opt);
\draw[arrow] (opt) -- (prop);
\draw[arrow] (prop) -- node[above, font=\scriptsize]{accept} (state);

\draw[arrow] (prop) -- node[right, font=\scriptsize]{evaluate} (measure);
\draw[arrow] (measure) -- node[above, font=\scriptsize]{innovation} (ctrl);

\draw[arrow, dashed] (ctrl.north) -- node[right, font=\scriptsize]{restore safe state} (state.south);

\end{tikzpicture}
}
\caption{Runtime stability controller as an external supervisory layer. The optimizer proposes parameter updates based on training batches. A secondary measurement signal evaluates the proposed update, and a stability controller decides whether to accept it or trigger rollback by restoring a previously accepted safe state. The controller operates solely on the measurement signal and stored snapshots.}
\label{fig:controller_overview}
\vspace{-0.2cm}
\end{figure}

\section{Algorithm}

We now formalize the runtime stability controller in Algorithm~\ref{alg:stability_controller}. 

Let $\theta_t \in \mathrm{R}^d$ denote the active model parameters at iteration $t$, and let $\omega_t$ denote the active optimizer state. The optimizer state includes all internal variables maintained by the optimizer, such as momentum buffers, adaptive moment estimates, learning-rate schedules, and any other stateful quantities required to continue training. We define the active training state as
\begin{equation}
X_t := (\theta_t,\omega_t).
\end{equation}

At each iteration, the base optimizer induces a tentative transition from the active state $X_t$ to a proposed state
\begin{equation}
X_t^{\mathrm{prop}} := 
(\theta_t^{\mathrm{prop}},\omega_t^{\mathrm{prop}}),
\end{equation}
where $\theta_t^{\mathrm{prop}}$ denotes the proposed parameter vector and $\omega_t^{\mathrm{prop}}$ denotes the corresponding proposed optimizer state. This distinction is necessary for stateful optimizers: an optimizer step changes not only the parameters but also the optimizer memory.

The controller maintains a separate recovery snapshot
\begin{equation}
X_{\mathrm{safe}} := 
(\theta_{\mathrm{safe}},\omega_{\mathrm{safe}}),
\end{equation}
which stores the most recent accepted training state. The distinction between $X_t$ and $X_{\mathrm{safe}}$ is important. The active state $X_t$ is the state from which the optimizer proposes the next transition, whereas $X_{\mathrm{safe}}$ is the state restored when the proposal is rejected. Thus, rollback is defined over the full training state, not merely over the model parameters.

The controller also maintains a scalar reference signal $\hat{y}_t$, computed from previously accepted measurements. Given a measurement function $y(\cdot)$, the innovation associated with a proposed update is
\begin{equation}
\nu_t = y(\theta_t^{\mathrm{prop}}) - \hat{y}_t.
\end{equation}
The proposed transition is accepted only if $\nu_t \leq \epsilon$, where $\epsilon$ is the prescribed innovation tolerance. If $\nu_t > \epsilon$, the proposed transition is rejected and the controller restores the most recent safe state.

\begin{algorithm}[t]
\caption{Runtime Stability Controller with State Recovery}
\label{alg:stability_controller}
\begin{algorithmic}[1]
\STATE \textbf{Input:} Initial parameters $\theta_0$, initial optimizer state $\omega_0$, measurement function $y(\cdot)$, threshold $\epsilon$, smoothing parameter $\alpha$
\STATE $\hat{y}_0 \leftarrow y(\theta_0)$
\STATE $(\theta_{\mathrm{safe}},\omega_{\mathrm{safe}}) \leftarrow (\theta_0,\omega_0)$
\FOR{$t = 0,1,2,\dots$}
    \STATE $(\Delta\theta_t,\omega_t^{\mathrm{prop}}) \leftarrow \operatorname{OptStep}(\theta_t,\omega_t)$
    \STATE $\theta_t^{\mathrm{prop}} \leftarrow \theta_t + \Delta\theta_t$
    \STATE $\nu_t \leftarrow y(\theta_t^{\mathrm{prop}}) - \hat{y}_t$
    \IF{$\nu_t \leq \epsilon$}
        \STATE $\theta_{t+1} \leftarrow \theta_t^{\mathrm{prop}}$
        \STATE $\omega_{t+1} \leftarrow \omega_t^{\mathrm{prop}}$
        \STATE $(\theta_{\mathrm{safe}},\omega_{\mathrm{safe}}) \leftarrow (\theta_{t+1},\omega_{t+1})$
        \STATE $\hat{y}_{t+1} \leftarrow (1-\alpha)\hat{y}_t + \alpha y(\theta_{t+1})$
    \ELSE
        \STATE $(\theta_{t+1},\omega_{t+1}) \leftarrow (\theta_{\mathrm{safe}},\omega_{\mathrm{safe}})$
        \STATE $\hat{y}_{t+1} \leftarrow \hat{y}_t$
    \ENDIF
\ENDFOR
\end{algorithmic}
\end{algorithm}

Algorithm~\ref{alg:stability_controller} implements an innovation-based acceptance rule over the full training state. Lines~1-3 initialize the measurement reference and the recovery snapshot $(\theta_{\mathrm{safe}},\omega_{\mathrm{safe}})$, which stores both model parameters and optimizer state. Lines~5-7 form a proposed transition. The base optimizer is applied to the active state $(\theta_t,\omega_t)$, producing a parameter update $\Delta\theta_t$ and a proposed optimizer state $\omega_t^{\mathrm{prop}}$. The controller then evaluates the proposed parameters through the innovation $\nu_t = y(\theta_t^{\mathrm{prop}})-\hat{y}_t$. Lines~8-12 describe acceptance: if $\nu_t \leq \epsilon$, both the proposed parameters and optimizer state are promoted to the next active state, the recovery snapshot is updated, and the reference signal is refreshed using the accepted measurement. Lines~13-15 describe rejection: if $\nu_t > \epsilon$, the proposed transition is discarded, the full training state is restored from the recovery snapshot, and the reference signal is left unchanged. Thus, rollback is a full-state operation over $(\theta,\omega)$ rather than a parameter-only reset. This is essential for stateful optimizers such as AdamW, where optimizer memory must be restored together with the model parameters.

\section{Theoretical Properties}

In this section, we analyze the theoretical properties of the proposed runtime stability controller. Rather than attempting to establish convergence guarantees for nonconvex optimization, which are generally unattainable for modern deep learning systems, we focus on \emph{runtime safety and recovery properties}. 

\subsection{Bounded Degradation Invariant}
\begin{assumption}[Rollback Capability]
\label{ass:rollback}
The controller maintains a snapshot buffer storing the most recent accepted parameter and optimizer state $(\theta_t, \mathcal{O}_t)$ and can restore this state exactly when a rollback action is triggered.
\end{assumption}

\begin{assumption}[Acceptance Threshold on Innovation]
\label{ass:threshold}
There exists a tolerance $\epsilon > 0$ such that the controller accepts a proposed update at iteration $t$ only if
\begin{equation}
\nu_t \;:=\; y(\theta_t^{\mathrm{prop}}) - \hat{y}_t \;\le\; \epsilon,
\end{equation}
and otherwise triggers rollback by restoring the most recent safe snapshot.
\end{assumption}

\begin{theorem}[Bounded Deviation from the Reference Signal]
\label{thm:bounded}
Under Assumptions~\ref{ass:rollback} and~\ref{ass:threshold}, the sequence of accepted states produced by the controller satisfies, for all $t$,
\begin{equation}
y(\theta_{t+1}) \;\le\; \hat{y}_t + \epsilon.
\end{equation}
\end{theorem}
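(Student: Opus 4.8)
The plan is to split the argument according to the controller's binary decision at iteration $t$ and to handle the rollback branch by maintaining an inductive invariant on the stored safe state. The accept branch is immediate: if the update is accepted, then by Assumption~\ref{ass:threshold} we have $\theta_{t+1} = \theta_t^{\mathrm{prop}}$ with $\nu_t = y(\theta_t^{\mathrm{prop}}) - \hat{y}_t \le \epsilon$, so $y(\theta_{t+1}) = \hat{y}_t + \nu_t \le \hat{y}_t + \epsilon$ directly. The entire difficulty therefore lies in the rollback branch, where $\theta_{t+1}$ equals a snapshot accepted at some earlier iteration and must still be bounded against the current reference $\hat{y}_t$.

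To control the rollback branch I would formulate and prove the invariant $y(\theta_{\mathrm{safe}}) \le \hat{y}_t + \epsilon$, asserting that the stored safe state always satisfies the target bound relative to the reference in force at iteration $t$. The base case holds since $\theta_{\mathrm{safe}} = \theta_0$ and $\hat{y}_0 = y(\theta_0)$ at initialization (Algorithm~\ref{alg:stability_controller}). For the inductive step I would again split on the action taken. On a rollback, neither $\theta_{\mathrm{safe}}$ nor the reference changes ($\hat{y}_{t+1} = \hat{y}_t$ by Assumption~\ref{ass:rollback} and the algorithm), so the invariant is trivially inherited. On an acceptance, $\theta_{\mathrm{safe}}$ is overwritten with $\theta_{t+1} = \theta_t^{\mathrm{prop}}$ while $\hat{y}$ advances by the exponential-moving-average update $\hat{y}_{t+1} = (1-\alpha)\hat{y}_t + \alpha\, y(\theta_{t+1})$, and I must verify the invariant against this new reference.

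The crux is precisely this accept-and-update step, since the reference moves simultaneously with the snapshot. I would exploit the convex-combination structure: computing $y(\theta_{t+1}) - \hat{y}_{t+1} = (1-\alpha)(y(\theta_{t+1}) - \hat{y}_t) = (1-\alpha)\nu_t$, and using $\nu_t \le \epsilon$ together with $\alpha \in (0,1)$ to conclude $y(\theta_{t+1}) - \hat{y}_{t+1} \le (1-\alpha)\epsilon \le \epsilon$. Intuitively, because the EWMA pulls $\hat{y}$ toward the freshly accepted measurement, the accepted state sits even closer to the updated reference than the threshold permitted, so the invariant is preserved (indeed with slack $(1-\alpha)\epsilon$). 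With the invariant established, the rollback branch follows immediately, $y(\theta_{t+1}) = y(\theta_{\mathrm{safe}}) \le \hat{y}_t + \epsilon$, and combined with the accept branch this proves the bound for all $t$. The main obstacle I anticipate is the bookkeeping of the simultaneous update of snapshot and reference on acceptance; the convex-combination identity is what makes it clean, and it is worth double-checking that the algorithm does not advance $\hat{y}$ on rollback (it does not), since otherwise the inherited bound could drift.
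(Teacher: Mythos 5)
Your proof is correct, but it takes a genuinely different route from the paper on the rollback branch. The paper's proof handles rollback by fiat: it observes that rollback restores the most recent accepted state ($\theta_{t+1}=\theta_t$) and then declares the inequality to hold \emph{vacuously}, reading the theorem as a statement about accepted transitions only. You instead prove the inequality literally for every $t$, including rollback steps, by maintaining the inductive invariant $y(\theta_{\mathrm{safe}}) \le \hat{y}_t + \epsilon$ and verifying it through the EWMA identity $y(\theta_{t+1}) - \hat{y}_{t+1} = (1-\alpha)\nu_t \le (1-\alpha)\epsilon$ on acceptance, and through the freezing of both $\theta_{\mathrm{safe}}$ and $\hat{y}$ on rollback. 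This buys you a strictly stronger conclusion — the bound holds unconditionally, with the extra quantitative observation that freshly accepted states sit within $(1-\alpha)\epsilon$ of the updated reference — and it also subsumes, rather than silently assumes, the identity $\theta_{\mathrm{safe}}=\theta_t$ that the paper's rollback step uses without proof. The cost is that your argument imports hypotheses beyond Assumptions~\ref{ass:rollback} and~\ref{ass:threshold}: the specific update rule $\hat{y}_{t+1}=(1-\alpha)\hat{y}_t+\alpha\,y(\theta_{t+1})$ with $\alpha\in(0,1)$, and the fact that $\hat{y}$ is not advanced on rollback. These come from Algorithm~\ref{alg:stability_controller} (and are only formalized as hypotheses later, in Proposition~\ref{prop:envelope}), so strictly speaking you are proving a stronger theorem under stronger assumptions, whereas the paper's two-line proof stays exactly within the stated assumptions at the price of the ``vacuous'' caveat. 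Both are sound; yours would be the better proof if the theorem statement were revised to assert the bound for all transitions of the algorithm as implemented.
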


\begin{proof}
At iteration $t$, the optimizer proposes $\theta_t^{\mathrm{prop}}$ and the controller computes
$\nu_t = y(\theta_t^{\mathrm{prop}}) - \hat{y}_t$.
If $\nu_t \le \epsilon$, the update is accepted and $\theta_{t+1} = \theta_t^{\mathrm{prop}}$, hence
$y(\theta_{t+1}) = y(\theta_t^{\mathrm{prop}}) \le \hat{y}_t + \epsilon$.
If $\nu_t > \epsilon$, rollback is triggered and the controller restores the safe snapshot, yielding $\theta_{t+1} = \theta_t$ (the most recent accepted state), so the accepted state remains unchanged and the inequality holds vacuously because it concerns accepted transitions only.
\end{proof}

Theorem~\ref{thm:bounded} establishes a \emph{runtime safety invariant}: regardless of optimizer behavior, the controller prevents unbounded instantaneous degradation in the measurement signal.
\subsection{Recovery from Destabilizing Updates}

We next formalize the recovery property enforced by the controller when a proposed update is rejected. The relevant object is the full training state, not the parameter vector alone.

\begin{proposition}[Rejection Invariant and Full-State Recovery]
\label{prop:recovery}
Let $X_t=(\theta_t,\omega_t)$ denote the active training state at iteration $t$, where $\theta_t$ is the model parameter vector and $\omega_t$ is the optimizer state. Let $X_t^{\mathrm{prop}}=(\theta_t^{\mathrm{prop}},\omega_t^{\mathrm{prop}})$ 
denote the optimizer-proposed state, and let $X_{\mathrm{safe}}^t=(\theta_{\mathrm{safe}}^t,\omega_{\mathrm{safe}}^t)$ denote the recovery snapshot available to the controller at iteration $t$.

Assume exact rollback semantics: whenever a proposal is rejected, the next active state is restored from the available recovery snapshot. Assume also that the reference signal is updated only after accepted transitions. If the controller rejects the proposal at iteration $t$, i.e. $\nu_t>\epsilon$, then
\begin{equation}
X_{t+1}=X_{\mathrm{safe}}^t
\quad\text{and}\quad
\hat{y}_{t+1}=\hat{y}_t.
\end{equation}
Equivalently,
\begin{equation}
\theta_{t+1}=\theta_{\mathrm{safe}}^t,
\qquad
\omega_{t+1}=\omega_{\mathrm{safe}}^t.
\end{equation}
Consequently, the rejected proposed state $X_t^{\mathrm{prop}}$ is not admitted into either the subsequent training state or the reference dynamics used for future acceptance decisions.
\end{proposition}

\begin{proof}
Since $\nu_t>\epsilon$, the proposed transition is rejected by the controller. By exact rollback semantics, rejection replaces the next active training state with the recovery snapshot available at iteration $t$. Hence $X_{t+1}=X_{\mathrm{safe}}^t$. Taking components yields: 
\begin{equation}
\theta_{t+1}=\theta_{\mathrm{safe}}^t,
\end{equation}
\begin{equation}
    \omega_{t+1}=\omega_{\mathrm{safe}}^t.
\end{equation}

Because the reference signal is updated only after accepted transitions, and the current proposal is rejected, no measurement from the proposed state is incorporated into the reference signal. Therefore,
\begin{equation}
    \hat{y}_{t+1}=\hat{y}_t.
\end{equation}

It follows that the rejected proposal affects neither the next active training state nor the reference signal that governs subsequent acceptance decisions.
\end{proof}

Proposition~\ref{prop:recovery} establishes a rollback invariant for destabilizing updates: rejected proposals are excluded from the accepted training trajectory. Importantly, recovery is defined over the full state $(\theta,\omega)$, including optimizer memory, rather than as a parameter-only reset. This distinction is necessary for stateful optimizers, where retaining optimizer statistics from a rejected update could reintroduce the effects of the destabilizing transition in subsequent steps.

\subsection{Dominance over Unconditional Acceptance}

Finally, we compare the controlled training process against unconditional acceptance of optimizer updates.

\begin{proposition}[Monotone Safety Envelope]
\label{prop:envelope}
Assume $\hat{y}_0 = y(\theta_0)$ and $\hat{y}_{t+1} = (1-\alpha)\hat{y}_t + \alpha y(\theta_{t+1})$ is updated only on accepted steps, with $\alpha \in (0,1)$.
Under Assumptions~\ref{ass:rollback} and~\ref{ass:threshold}, for all $t \ge 0$,
\begin{equation}
y(\theta_{t+1}) \le \max_{k \le t} y(\theta_k) + \epsilon,
\end{equation}
and consequently
\begin{equation}
\max_{k \le t+1} y(\theta_k) \le y(\theta_0) + (t+1)\epsilon.
\end{equation}
\end{proposition}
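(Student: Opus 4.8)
The plan is to prove the two displayed inequalities in sequence, the first acting as the engine for the second, after first establishing one key invariant: the reference signal never exceeds the running maximum of accepted measurements, i.e. $\hat{y}_t \le \max_{k \le t} y(\theta_k)$ for all $t$. I would prove this by induction on $t$. The base case is immediate from $\hat{y}_0 = y(\theta_0)$. For the inductive step I would split on the controller's decision. On an accepted step, the update $\hat{y}_{t+1} = (1-\alpha)\hat{y}_t + \alpha y(\theta_{t+1})$ is a convex combination of $\hat{y}_t$ and $y(\theta_{t+1})$ — here the hypothesis $\alpha \in (0,1)$ is exactly what guarantees nonnegative weights summing to one — so bounding $\hat{y}_t$ by the inductive hypothesis and $y(\theta_{t+1})$ by the running maximum, both of which are at most $\max_{k \le t+1} y(\theta_k)$, gives $\hat{y}_{t+1} \le \max_{k \le t+1} y(\theta_k)$. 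On a rejected step the reference is frozen, $\hat{y}_{t+1} = \hat{y}_t$, and the bound is inherited from the inductive hypothesis since the running maximum is nondecreasing in $t$.

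With this invariant in hand, the first inequality follows by case analysis. On accepted steps, Theorem~\ref{thm:bounded} gives $y(\theta_{t+1}) \le \hat{y}_t + \epsilon$, and combining with the invariant yields $y(\theta_{t+1}) \le \max_{k \le t} y(\theta_k) + \epsilon$. On rejected steps, Proposition~\ref{prop:recovery} gives $\theta_{t+1} = \theta_t$, so $y(\theta_{t+1}) = y(\theta_t) \le \max_{k \le t} y(\theta_k)$, which is strictly stronger. Hence the bound holds unconditionally, and I avoid relying on the somewhat vacuous rollback case of Theorem~\ref{thm:bounded} by treating rejection through Proposition~\ref{prop:recovery} directly.

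For the second inequality I would set $M_t \triangleq \max_{k \le t} y(\theta_k)$ and observe that the first inequality forces $M_{t+1} = \max(M_t, y(\theta_{t+1})) \le M_t + \epsilon$, since both arguments of the maximum are at most $M_t + \epsilon$. Iterating this one-step increment from $M_0 = y(\theta_0)$ telescopes to $M_{t+1} \le y(\theta_0) + (t+1)\epsilon$, which is precisely the claimed monotone envelope.

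The individual steps are routine, and I expect the only delicate point to be the bookkeeping around rejected steps, where neither the parameters nor the reference signal advance. The proof must confirm that both the convex-combination argument for $\hat{y}_t$ and the running-maximum recursion for $M_t$ remain valid when the EMA is frozen and the state is restored rather than updated; this is where the interaction of Assumptions~\ref{ass:rollback} and~\ref{ass:threshold} with the EMA dynamics actually matters, and it is the part most easily mishandled if one naively treats the recursion as though every step were accepted.
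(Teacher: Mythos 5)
Your proof is correct and follows essentially the same route as the paper: a case split on accept versus reject, with Theorem~\ref{thm:bounded} plus the bound $\hat{y}_t \le \max_{k \le t} y(\theta_k)$ handling accepted steps, Proposition~\ref{prop:recovery} handling rejections, and a one-step envelope recursion telescoping to the second inequality. The only difference is presentational: where the paper asserts in one line that $\hat{y}_t$, being a convex combination of previously accepted measurements, is dominated by the running maximum, you prove this invariant explicitly by induction (correctly handling the frozen-EMA case on rejected steps), which is a welcome tightening of the paper's argument rather than a new idea.
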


\begin{proof}
If the proposal at iteration $t$ is rejected, rollback yields $\theta_{t+1}=\theta_t$ by Proposition~\ref{prop:recovery}, hence the bound holds trivially.
If it is accepted, Theorem~\ref{thm:bounded} gives $y(\theta_{t+1}) \le \hat{y}_t + \epsilon$.
Since $\hat{y}_t$ is a convex combination of previously accepted measurements, it satisfies
$\hat{y}_t \le \max_{k \le t} y(\theta_k)$.
Therefore $y(\theta_{t+1}) \le \max_{k \le t} y(\theta_k) + \epsilon$.
The second inequality follows by taking the maximum over $k \le t+1$ and iterating the one-step envelope bound.
\end{proof}

The results above establish that the proposed controller enforces meaningful runtime guarantees independent of optimizer design or loss geometry. Importantly, these guarantees do not rely on convexity, smoothness, or probabilistic modeling assumptions. Instead, they formalize training reliability as a safety property, analogous to invariants used in control and safety-critical systems.

We emphasize that these guarantees apply to the rollback-based controller implemented and evaluated in this work. Extensions to richer action sets, such as update attenuation or regime switching, are conceptually compatible with the framework but are left for future work.

\section{Scale and Complexity Analysis}

A primary concern for runtime monitoring is the trade-off between stability and throughput. We analyze the overhead of the stability layer below.
\subsection{Computational Overhead}

The controller introduces additional computation through evaluation of the measurement signal. Let $C_{\mathrm{step}}$ denote the cost of a standard optimizer step, including the forward and backward passes on the training batch, and let $C_{\mathrm{probe}}$ denote the cost of evaluating the probe measurement. If the probe is evaluated every $q$ training steps, the average relative compute overhead is
\begin{equation}
\gamma_q =
\frac{1}{q}\cdot
\frac{C_{\mathrm{probe}}}{C_{\mathrm{step}}}.
\end{equation}
Thus, the overhead is controlled by two implementation choices: the size of the probe set and the measurement frequency. In the common case where the probe requires only a forward pass on a small fixed subset, $C_{\mathrm{probe}}$ is substantially smaller than the cost of a full training step. Moreover, intermittent measurement can reduce the average overhead without changing the underlying acceptance rule.

\section{Experiments}
We evaluate the proposed runtime stability controller under controlled destabilization scenarios designed to isolate failure detection, selective intervention, and recovery behavior. The experiments are diagnostic rather than benchmark-driven, and are designed to illustrate how the controller behaves under rare but severe training failures. All results are aggregated over $N=20$ independent random seeds.

\subsection{Catastrophic Recovery in Vision Models}

Figure~\ref{fig:resnet_recovery} shows probe loss trajectories for a ResNet-18 trained on CIFAR-10 under a multi-step catastrophic perturbation induced by gradient amplification. The baseline exhibits a pronounced loss spike followed by slow and highly variable recovery across seeds. In contrast, the controlled runs consistently limit peak degradation and return to a stable regime more rapidly, with substantially reduced variance.

\subsection{Innovation-Based Detection and Selective Intervention}

The innovation signal $\nu_t$ exhibits a sharp and localized deviation during the failure window (Figure~\ref{fig:innovation_signal}). Outside this window, the signal remains tightly bounded, indicating stable behavior under nominal training dynamics. This contrast demonstrates that the signal exhibits a sharp transition, crossing the safety threshold $\epsilon$. At step $t=120$, the signal exceeds the predefined safety threshold $\epsilon$. Because the controller monitors a probe set decoupled from the optimizer's gradients, it provides earlier and more reliable detection of destabilizing behavior than loss-based triggers, without introducing false positives during nominal training.

\subsection{Internal Stability via Parameter Norms}

Figure~\ref{fig:weight_norms} examines the evolution of the $\ell_2$ norm of model parameters during training. Following destabilization, baseline training transitions into a higher-energy parameter regime and remains there for the remainder of training. In contrast, controlled runs return to parameter norms consistent with nominal training behavior.

This suggests that the controller constrains internal model dynamics rather than merely correcting observable loss spikes, preventing irreversible drift into regions of parameter space often correlated with brittle behavior in practice.

\subsection{Generalization to Transformer Models}

To assess architectural generality, we repeat the recovery experiment using a character-level Transformer model. As shown in Figure~\ref{fig:transformer_recovery}, the controller again reduces peak degradation and accelerates recovery following the injected perturbation, while the baseline exhibits a prolonged elevated-loss regime. This result indicates that the proposed mechanism generalizes beyond convolutional architectures and is not tied to specific inductive biases.

\subsection{Intervention Sparsity}

Across all experiments, the controller intervenes sparsely. Rollbacks are typically triggered within a small number of iterations following perturbation onset, and rarely outside the injected failure window. In contrast, a naive loss-based rollback strategy exhibits delayed detection and frequent false positives due to minibatch noise. 

\begin{figure}[!t]
  \centering
  \begin{minipage}[t]{0.485\linewidth}
    \centering
    \includegraphics[width=\linewidth]{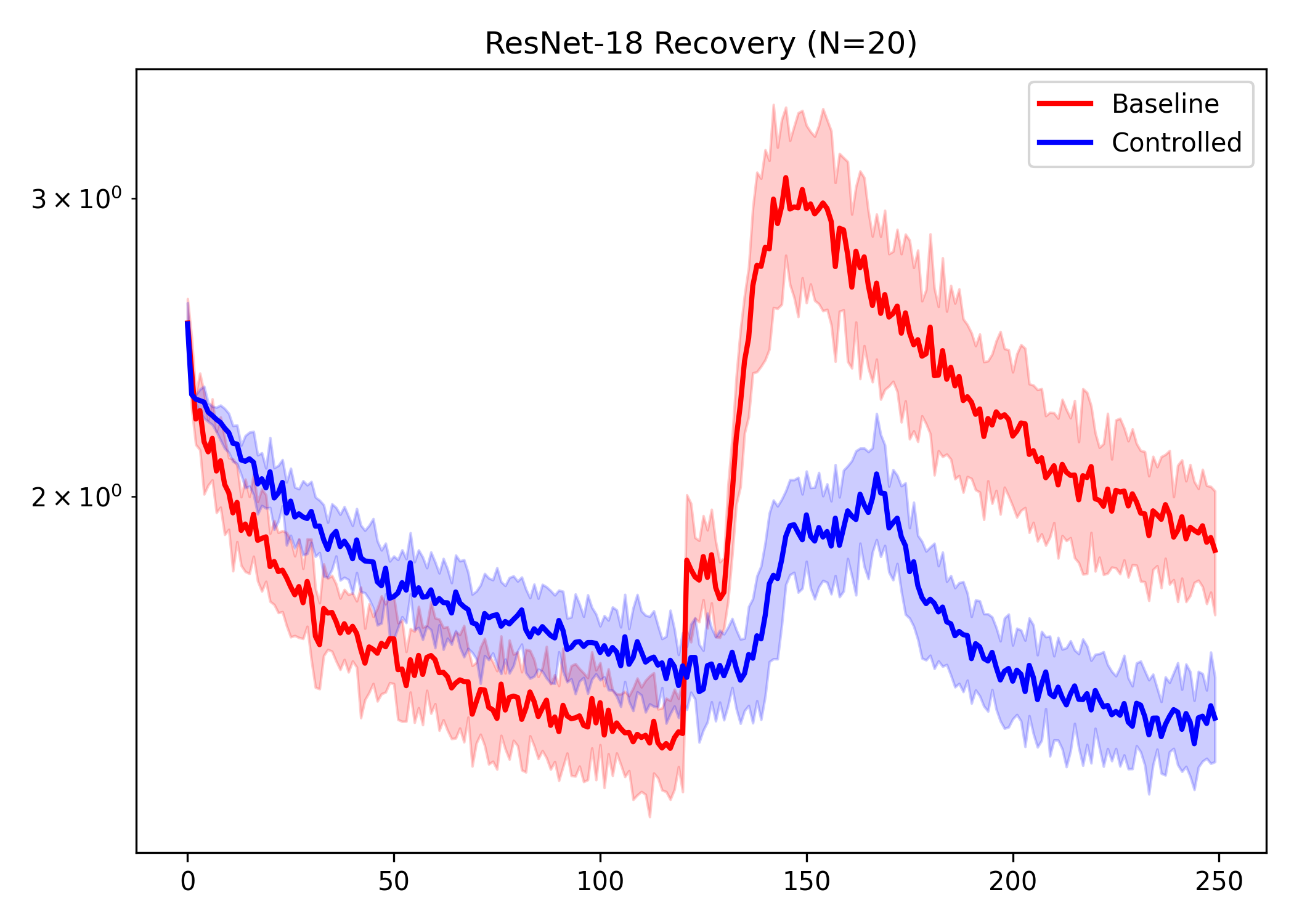}
    \caption{Probe loss recovery for ResNet-18 on CIFAR-10 under a multi-step catastrophic perturbation. Curves show mean $\pm$ standard deviation over $N=20$ seeds.}
    \label{fig:resnet_recovery}
  \end{minipage}\hfill
  \begin{minipage}[t]{0.485\linewidth}
    \centering
    \includegraphics[width=\linewidth]{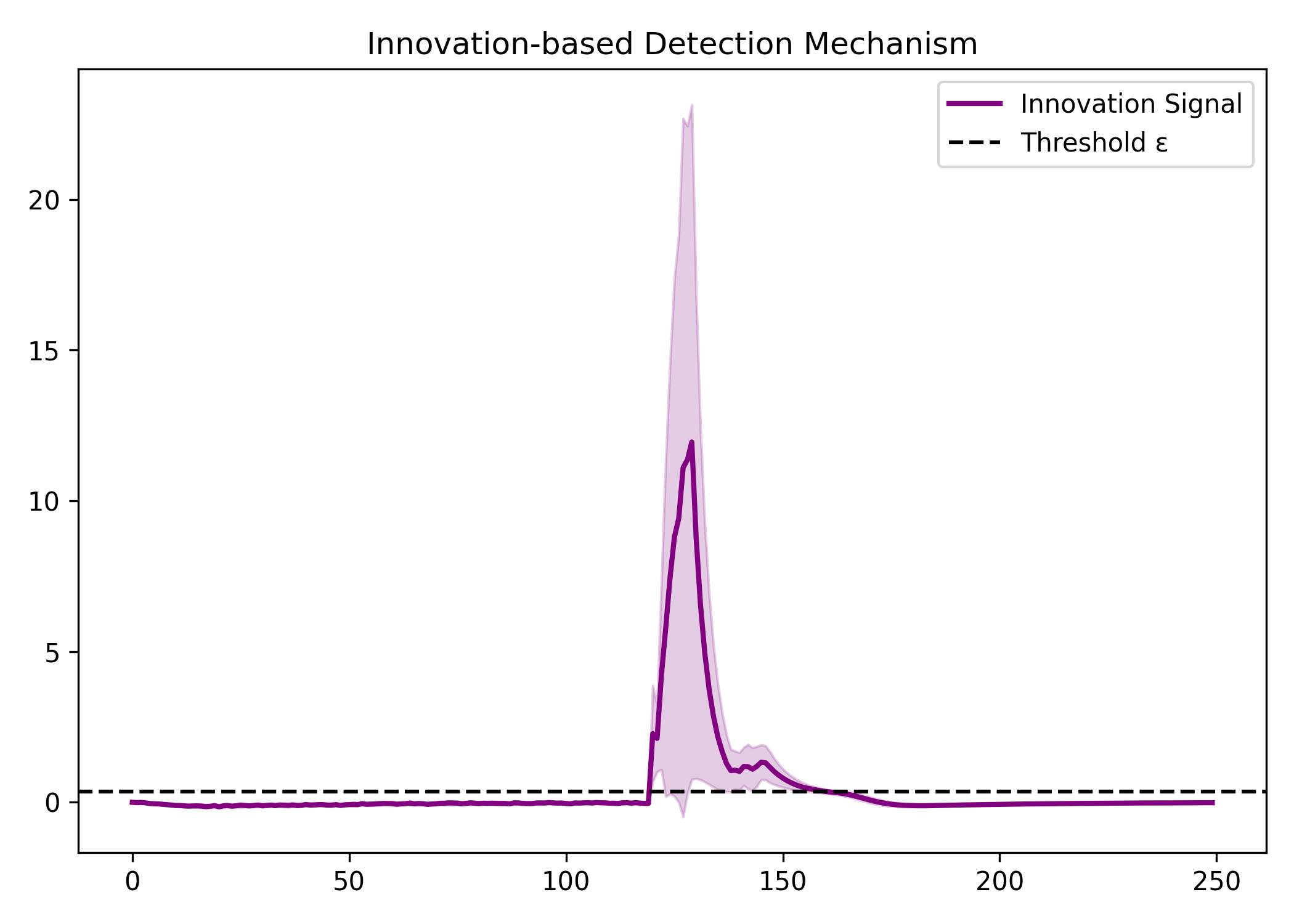}
    \caption{Innovation signal $\nu_t$ during the injected destabilization window. The localized deviation enables reliable detection without sensitivity to minibatch noise.}
    \label{fig:innovation_signal}
  \end{minipage}

  \vspace{0.45em}

  \begin{minipage}[t]{0.485\linewidth}
    \centering
    \includegraphics[width=\linewidth]{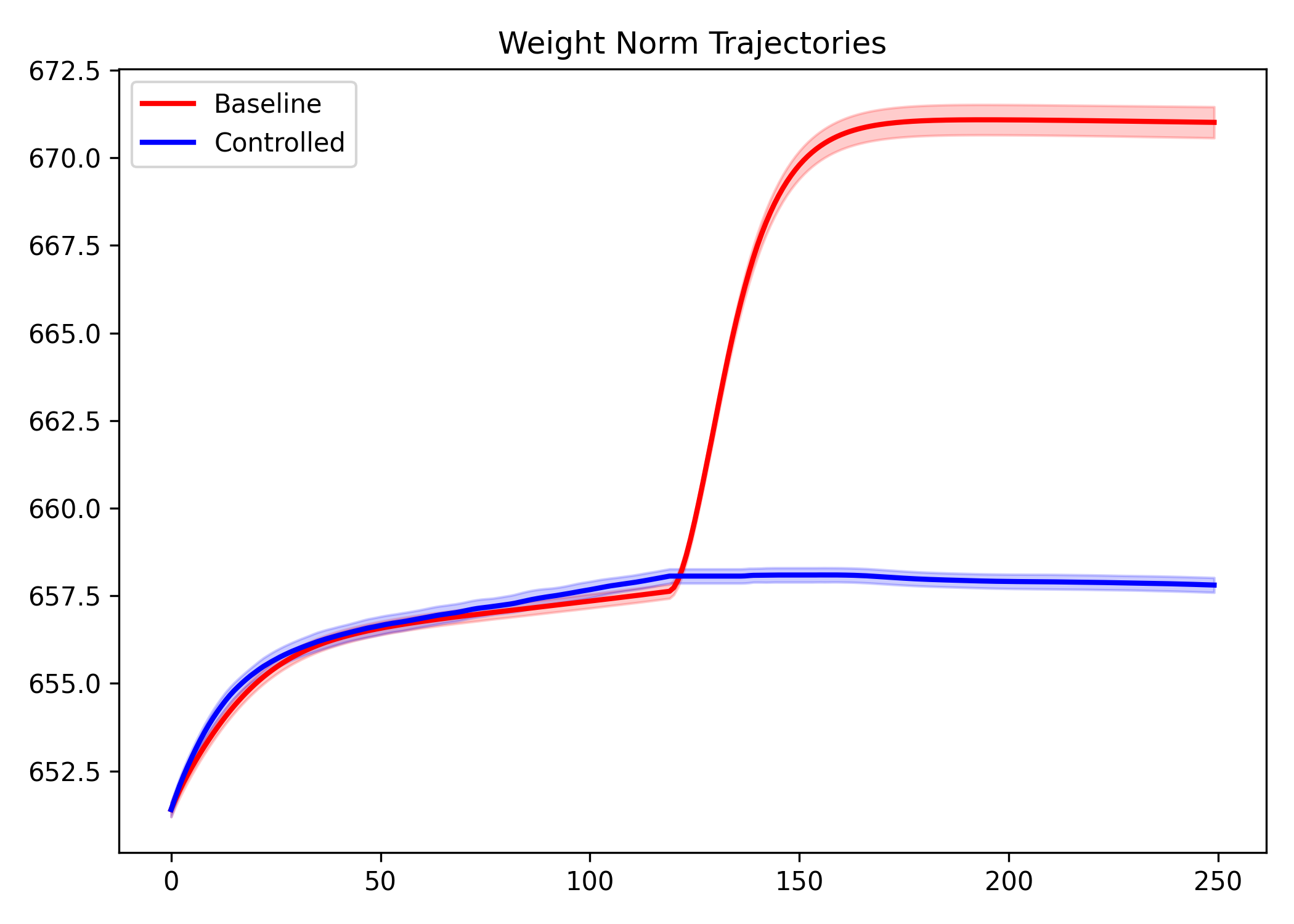}
    \caption{Evolution of parameter $\ell_2$ norms during training. The controller prevents persistent drift into high-norm regimes after destabilization.}
    \label{fig:weight_norms}
  \end{minipage}\hfill
  \begin{minipage}[t]{0.485\linewidth}
    \centering
    \includegraphics[width=\linewidth]{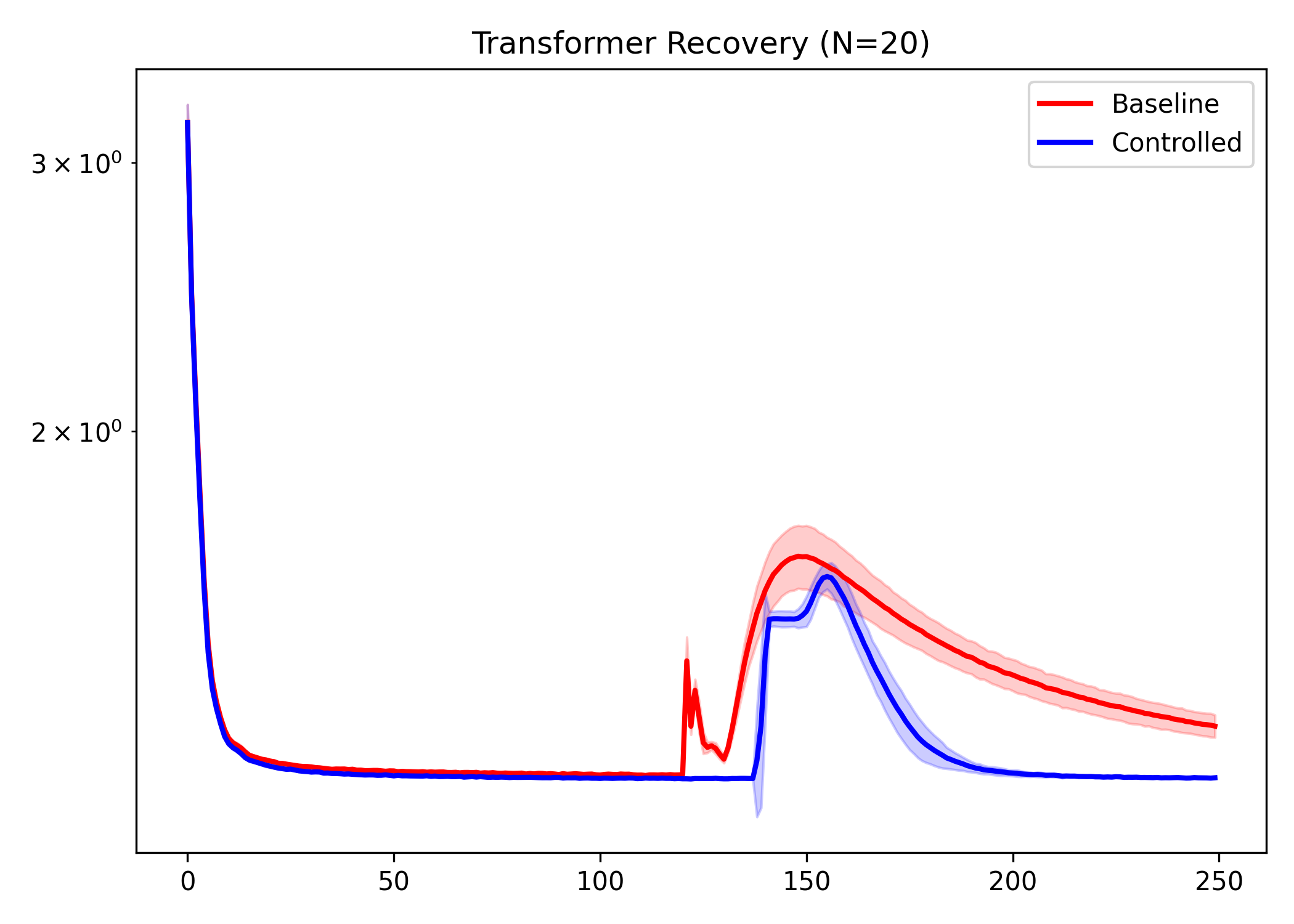}
    \caption{Probe loss recovery for a character-level Transformer under catastrophic perturbation. Results are averaged over $N=20$ seeds.}
    \label{fig:transformer_recovery}
  \end{minipage}
\end{figure}
\section{Discussion and Limitations}

The empirical results support the view of runtime stability as an acceptance problem rather than only an optimizer-design problem. In the ResNet-18 experiment, Figure~\ref{fig:resnet_recovery} shows that the baseline trajectory suffers a large probe-loss excursion after the injected perturbation, followed by slow and high-variance recovery. In contrast, the controlled trajectory limits the peak degradation and returns more rapidly to the pre-perturbation regime. This behavior is consistent with the intended role of the controller: it does not attempt to make every optimizer step safer in advance, but instead prevents destabilizing proposals from being incorporated into the accepted training trajectory.

The innovation signal provides the mechanism behind this behavior. As shown in Figure~\ref{fig:innovation_signal}, the signal remains close to its nominal range during ordinary training and exhibits a sharp, localized deviation during the destabilization window. This separation is important because it indicates that the controller is not merely reacting to noisy fluctuations in the training loss. Rather, the fixed probe measurement produces a stable external reference against which proposed updates can be evaluated. The observed sparsity of interventions further suggests that the controller is selective: rollback is triggered near the failure event and is not used as a persistent correction mechanism during nominal optimization.

Figure~\ref{fig:weight_norms} provides complementary evidence that the controller affects the internal training dynamics rather than only the observed probe loss. After destabilization, the baseline enters a higher-norm parameter regime and remains there, while the controlled run returns to a trajectory consistent with nominal training. This supports the interpretation that rollback prevents persistent drift in parameter space after a catastrophic update. The Transformer experiment in Figure~\ref{fig:transformer_recovery} further indicates that the same mechanism is not specific to convolutional architectures: the controller again reduces peak degradation and accelerates recovery in a sequence-modeling setting.

Taken together, the results show that a lightweight supervisory layer can provide recovery behavior that is absent from standard optimizer-only training pipelines. The controller preserves ordinary optimization during stable regimes, but introduces a state-restoration mechanism when the measured effect of a proposed update becomes inconsistent with recent accepted behavior. This is the central practical distinction from preventive methods such as clipping, adaptive learning rates, or trust-region-style constraints: those methods seek to reduce the likelihood of harmful updates, whereas the proposed controller supplies an explicit runtime mechanism for detecting and excluding them after they are proposed.

The main limitation of the present study is that the experiments are diagnostic rather than exhaustive: they use controlled destabilization, a fixed probe-based measurement signal, and a binary accept-or-rollback action set. Future work should evaluate broader failure modes, adaptive thresholding, alternative measurement signals, and distributed training settings. These extensions do not change the core mechanism studied here, but they are necessary for characterizing the controller under more diverse large-scale training conditions.

\section{Conclusion}

We presented a runtime stability controller for neural network training that emphasizes reliability and recovery rather than convergence guarantees. By framing training as a controlled stochastic process and introducing a mechanism that evaluates optimizer-proposed updates using secondary measurement signals, the proposed approach enables detection and mitigation of destabilizing events that can irreversibly degrade training. 

Unlike existing optimization methods that embed preventive mechanisms within the update rule, the proposed controller operates as an external layer that governs the acceptance and execution of updates at runtime. This design preserves compatibility with standard optimizers while enabling exact recovery through rollback when instability is detected. We provided safety-style theoretical guarantees that formalize bounded degradation and one-step recovery properties, and demonstrated empirically that the controller reduces peak degradation, stabilizes internal model dynamics, and improves reliability across both convolutional and Transformer-based architectures. Our work provides a blueprint for \textbf{self-healing training pipelines}, shifting the paradigm from manual hyperparameter tuning to automated, control-theoretic runtime stability.

\begingroup
\footnotesize

\endgroup

\end{document}